\newcommand{\bbE}{\mathbb{E}}
\newcommand{\bbR}{\mathbb{R}}
\newcommand{\bbP}{\mathbb{P}}
\newcommand{\ba}{\mathbf{a}}
\newcommand{\calD}{\mathcal{D}}
\newcommand{\calM}{\mathcal{M}}
\newcommand{\calQ}{\mathcal{Q}}
\newtheorem{theorem}{Theorem}
\newtheorem{proposition}{Proposition}
\title{Better Membership Inference Privacy Measurement through Discrepancy}
\author{%
  Ruihan Wu\thanks{Equal contribution} \\
  University of California, San Diego\\
  \texttt{ruw076@ucsd.edu} \\
  \And
  Pengrun Huang$^*$ \\
  University of California, San Diego\\
  \texttt{peh006@ucsd.edu} \\
  \And
  Kamalika Chaudhuri \\
  University of California, San Diego\\
  \texttt{kamalika@cs.ucsd.edu} \\
}
\begin{document}

\maketitle
\begin{abstract}
Membership Inference Attacks have emerged as a dominant method for empirically measuring privacy leakage from machine learning models. Here, privacy is measured by the {\em{advantage}} or gap between a score or a function computed on the training and the test data. A major barrier to the practical deployment of these attacks is that they do not scale to large well-generalized models -- either the advantage is relatively low, or the attack involves training multiple shadow models which is high compute-intensive.
 In this work, inspired by discrepancy theory, we propose a new empirical privacy metric that is an upper bound on the advantage of a specific family of membership inference attacks.
 We introduce an easy-to-compute approximation CPM to the upper bound. 
We empirically validate CPM is higher than the advantage of most popular existing attacks and because of its light computation, it can be applied to large ImageNet classification models in-the-wild. 
More interestingly, we find the gaps between the new metric and the advantage of existing attacks are larger on advanced models trained with sophisticated training recipes.
Motivated by this empirical result, we propose new membership inference attacks tailored to those advanced models. 
\end{abstract}


\section{Introduction}

Many machine-learning models are now trained on highly sensitive data, such as medical records, browsing history and financial information, and leakage of training data from these models would cause serious concern~\citep{fredrikson2014privacy, papernot2016towards, carlini2019secret}. Consequently, there has been a body of technical literature on how to measure privacy leakage from the training data of machine learning models~\citep{shokri2017membership, yeom2018privacy, zhu2019deep, meehan2024ssl}. One of the most dominant methods for empirically measuring privacy leakage is Membership Inference~\citep{shokri2017membership, yeom2018privacy, jayaraman2020revisiting, watson2021importance, song2021systematic, carlini2022membership, zhu2024uncertainty}, that has been designated as a potential confidentiality violation by government organizations such as NIST (US)  as well as the ICO (UK), and deployed in industry applications, such as the privacy auditing library of Tensorflow~\citep{song2020introducing}. Given a model, a data point, and possibly some auxiliary information, a Membership Inference Attack (MIA) predicts whether the given data point is included in the training data of the model or not. Privacy leakage is measured by the gap between the accuracy of a membership inference attack on the training data and the test data of a model; this gap is called the advantage and higher advantage indicates more privacy leakage.



The current literature on MIA falls into two main categories. The first one is score-based MIA, which is motivated by the idea that certain scoring functions computed on a model and a data point, such as the cross-entropy loss, are quite different on average when evaluated on training and test data points. This idea has led to a proliferation of a number of score functions~\citep{shokri2017membership, yeom2018privacy, song2021systematic, rezaei2021difficulty}; this class of MIAs are computationally efficient, but tend to have a low advantage on large well-generalized models. The second category also uses scoring functions, but adjusts them per data point using ``shadow models''~\citep{shokri2017membership, watson2021importance, carlini2022membership}, which are models very similar to the model in question, but trained on a different auxiliary dataset. These MIAs require the attacker to train multiple shadow models which makes them computationally infeasible for large-scale models.



In this work, inspired by discrepancy theory, we propose a new metric for empirically measuring privacy leakage from models. We observe that the discrepancy between the training and test data with respect to a class of sets $\mathcal{Q}$ is an upper bound on the advantage of any score-based MIA whose discriminative set lies in $\mathcal{Q}$. 
An advantage upper bound is stronger than the advantage of any single MIA to ensure a model is privacy-preserving -- the advantage of any MIA in this family would not exceed this upper bound.   
Thus, we propose the discrepancy with respect to all convex sets in the probability space of a neural network as an empirical privacy metric.  We prove that this metric is a upper bound on the advantage of four popular score-based MIAs -- entropy, maximum-softmax-probability, cross-entropy and modified entropy~\citep{song2021systematic} --and is hence at least as strong as either of them. 
Additionally, our numerical experiments show that this metric has discriminative power, and is able to distinguish between a large number of models. 
Finally, even though the exact computation of our metric may be hard, we propose a new algorithm for approximating it using a surrogate loss function -- a metric that we call CPM, and we show the value of CPM is reachable by some score-based MIA.
Comparing our new metric to the two categories of existing MIAs, it is stronger than the popular score-based MIAs and more computational-feasible than the MIAs leveraging ``shadow models".


We then extensively evaluate CPM by comparing it with score-based MIAs on various models and several datasets, where we follow the setups in the MIA literature, as well as out-of-the-box ImageNet pre-trained models released by the PyTorch Torchvision library~\citep{paszke2019pytorch}. We observe that existing MIAs are upper-bounded by the CPM, which supports that it is a stronger privacy metric. Interestingly, we find the gap between CPM and existing score-based MIAs is small for standard models trained with cross-entropy loss, but considerably larger for models trained with more sophisticated generalization methods or an MIA defense. 
Because CPM is reachable by some score-based MIA,
this suggests that the design of existing scoring-based MIAs may be overfitting to standard models, and other better scoring functions may be needed to measure membership inference properly in the more modern models. This is also corroborated by our findings on the Pytorch ImageNet pre-trained models, where CPM outperforms the baselines significantly in the Resnetv2 models that use a complicated training recipe, and not as much in the simpler Resnetv1 models. 



A natural question suggested by these experimental results is whether there are scoring functions that perform better on models trained in a more sophisticated way. 
This is an interesting question; for example, prior work~\citep{duan2024membership} has shown that currents MIAs do not work for really large language models.
One possible solution is to design training-procedure aware scores.
Motivated from this, for the model trained by the generalization technique MixUp~\citep{zhang2018mixup} or the MIA defense algorithm RelaxLoss~\citep{chen2022relaxloss}, we propose two new score-based MIAs, the MixUp score and the RelaxLoss score, that mimic their training procedure respectively. We empirically observe that the advantage of the training procedure aligned MIA score is the highest.
This suggests that a plausible reason why MIAs do not work as well on really large models might be the use of incorrect scores; we leave the design of more training-aware MIA scores for these modern models for future work.

\section{Preliminaries}
\label{sec:preliminary}
\textbf{Membership inference attack (MIA; \citep{shokri2017membership})} is a privacy attack where the goal is to predict whether a specific data point is included in the training data. 
This membership information can be sensitive -- for example, 
the membership to a medical dataset indicates whether a person has a medical record or not.
These attacks have been studied for a variety of machine learning models including classification~\citep{yeom2018privacy}, generative models~\citep{chen2020gan}, multi-modal models~\citep{ko2023practical} and large language models~\citep{wen2022canary}.


Suppose we have a model $f$ that is trained on a training dataset $S$ drawn from an underlying data distribution $\calD$. The input to an MIA $m$ is a data point $z = (x, y)$ and a trained model $f$, and the output is a $0/1$ value. $m(z, f) = 1$ means that the MIA predicts that the data point $z$ is in the training set $S$ of $f$. 
The advantage of an MIA $m$ w.r.t. the model $f$, training data $S$, and data distribution $\mathcal{D}$ is defined as the difference between how frequently $m$ predicts $1$ on a point in the training set, and how frequently it predicts $1$ on points drawn from an independent test set:
\begin{equation}
    {\rm Adv}(m; f, S, \mathcal{D}) := \bbP_{z\sim S}(m(z, f) = 1) - \bbP_{z\sim \calD}(m(z,f) = 1)\in[-1, 1].
    \label{eq:adv}
\end{equation}
Observe that the advantage is an empirical measure of privacy; if it is high, then it is easier to distinguish between training and test data points, which may, in turn, lead to the leakage of other private training data information even beyond the membership inference, such as attribute inference attack~\citep{yeom2018privacy}. We also observe that we need both training and some test data to calculate the advantage and evaluate how private a model is. 


Finally, we note that contrary to some of the literature~\citep{yeom2018privacy, nasr2021adversary}, our definition of advantage does not require multiple runs of training; this allows us to scale to large datasets and models where training multiple models for the purpose of evaluating privacy is too expensive. 



\textbf{Existing MIA literature} has two classes of MIA. The first, which we call score-based MIA, is motivated by the intuition that certain functions of a model $f$, such as training loss, are lower for training data points than test on average. Accordingly, they use a scoring function $h(z, f)\in \bbR$~\citep{shokri2017membership, yeom2018privacy}, where $z$ is the input data and a threshold to determine membership -- specifically,
\begin{equation}
\label{eq:thr_att}
m_{h, \tau}(z, f)=\mathds{1}\left[h(z, f)< \tau\right].
\end{equation}

In addition to the training loss, prior work has proposed several probability-based scoring functions for classification models. Suppose $f(x) \in\Delta_{C-1}$ is the softmax vector of probabilities output by a $C$-class classification model, and suppose the label $y$ is in one-hot format. Then, 
we summarize some popular probability-based scoring functions in the literature in Table~\ref{tab:mia_score}.
The gradient-based scoring function is another popular choice~\cite{rezaei2021difficulty}, which is the norm of the gradient of $x$ or the gradient of parameters in $f$ w.r.t. the loss.

While score-based MIA attacks are computationally efficient, they may have lower advantage; more recent work~\citep{shokri2017membership, watson2021importance, carlini2022membership} has sought to improve the advantage of the MIAs by leveraging ``shadow models'' -- which are essentially similar models (to the input model $f$) trained on auxiliary data drawn from the same distribution. 
With these additional models, one can design a more elaborate attack: instead of sharing one threshold $\tau$ for all data $z$ as defined in Equation~\ref{eq:thr_att}, one can now design a data-dependent threshold $\tau(z)$ or even a more complicated data-dependent decision boundary for $h(z, f)$ than just a threshold.
\citet{watson2021importance} trains multiple shadow models by following the same training procedure of the target model $f$ on the auxiliary dataset. It then calibrates the score of $z$ by subtracting the average score of $z$ among shadow models. 
Another representative method LiRA~\citep{carlini2022membership} is to estimate density functions for the two distributions of scores of z when the model is trained \emph{with} / \emph{without} the input data $z$, where the scoring function is pre-defined. After calculating the score of $z$ and $f$, it computes the ratio of density values of the two distributions and thresholds this ratio.

While MIAs that use shadow models tend to have higher advantage than score-based ones, they can be impractical because of two reasons. First, their performance is very sensitive to the knowledge of the adversary, including the distribution of auxiliary datasets and the details of the learning procedure~\citep{carlini2022membership, duan2024membership}. 
The second and more important aspect is computational cost. For modern models, training even a single model can take multiple GPUs and many days -- Llama2-70B takes 1720320 GPU hours for example~\citep{touvron2023llama} -- which makes it infeasible to train multiple (or even one) shadow model. With this in mind, we focus our attention to pure score-based MIAs in this paper.

\begin{table}[t]
 \centering
 \caption{MIA scores in the literature.}
\resizebox{\linewidth}{!}{\
\begin{tabular}{cc}
\toprule
Name & Definition\\
\midrule
Maximum-Softmax-Probability (MSP)~\citep{yeom2018privacy} & $-\max_{c\in [C]}f(x)_c$\footnote{We take the negative sign here because the smaller score implies the data more likely belonging to the training data as defined in Equation~\ref{eq:thr_att}.}\\
Entropy (ENT)~\citep{shokri2017membership} & $\sum_{c\in [C]}-f(z)_c\log(f(x)_c)$\\
Cross-Entropy Loss (CE)~\citep{yeom2018privacy} & $\sum_{c\in [C]}-y_c\log(f(x)_c)$\\
Modified Entropy (ME)~\citep{song2021systematic} & $-\sum_{c\in [C]}\left((1-f(x)_c)\log(f(x)_c)y_c + f(x)_c\log(1-f(x)_c)(1-y_c)\right)$\\
\bottomrule
\end{tabular}
}	
\label{tab:mia_score}
\end{table}

\section{A Better Empirical Privacy Metric}
\label{sec:discrepancy}



Recall from Section~\ref{sec:preliminary} that the advantage of an MIA on a model $f$ is an empirical privacy metric that measures how much $f$ ``leaks'' its training data, and that different scoring functions yield different empirical privacy metric values depending on the model and dataset. This raises a natural question: can we find an empirical privacy metric that encompasses all these scores-based advantages?

Before defining the metric, let us first discuss three properties that we expect from it.  First, it should be an upper bound on the advantage of a family of score-based MIAs and this indicates that it is a reasonably {\em{strict}} privacy metric to ensure a model is privacy-preserving -- the advantage of any MIA in this family would not exceed this upper bound.   
Second, the metric should be able to distinguish between different models so that we can use it to compare models by their privacy leakage; one counter-example is that the family of score-based MIAs are so expressive that training and test data can always be perfectly separated. Finally, it should be efficiently computable, or at the very least, approximately computable with relative ease.

\subsection{Better Privacy Metric through Discrepancy Distance}
\paragraph{Connecting MIA to discrepancy distance.} 
\citet{yeom2018privacy} showed that the advantage of a loss-function-based MIA is equal to the generalization gap between the training and test loss. But what happens when we look at, not a single score, but a family of scores?


Suppose $(x, y)$ is a labeled data and $f(x) \in\Delta_{C-1}$ is the softmax vector of probabilities output by a $C$-class classification model. For any MIA $m$ that is a post-hoc function of $(f(x), y)$, we can define the discriminative set of $m$ as
$$Q_m:=\{(f(x), y)|(x, y)\in\mathrm{supp}(\calD) \text{ and } m((x, y),  f)=1\},$$
where $\mathrm{supp}(\calD)$ is the support of data distribution $\calD$. This is essentially the set where the scoring function underlying the MIA predicts that $(x, y)$ lies in the training set. We can now rewrite the advantage of a MIA $m$ in terms of its discriminative set as follows: 
\begin{align*}
	\label{eq:discrepancy}
    {\rm Adv}(m; f, S, \mathcal{D}) = \bbP_{(x, y)\sim S}((f(x), y)\in Q_m) - \bbP_{(x, y)\sim \calD}((f(x), y)\in Q_m).
\end{align*}
Now, suppose $\calQ$ is a family of discriminative sets and $Q_m\in\calQ$; then, the advantage of $m$ is naturally bounded by the discrepancy distance~\citep{mansour2009domain, doerr2014calculation} between the training set $S$ and the test distribution $\calD$ with respect to $\calQ$, which is defined as 
$$
D_{\calQ}(S, \calD) := \sup_{Q\in \calQ} D(S, \calD|Q), 
$$
where $D(S, \calD|Q)=\left|\bbP_{(x, y)\sim S}((f(x), y)\in Q) - \bbP_{(x, y)\sim \calD}((f(x), y)\in Q)\right|.$ Formally, we state the relation between the advantage and discrepancy in the following proposition.
\begin{proposition}
\label{prop:discrepancy}
For any MIA $m$, if $Q_m\in \calQ$, ${\rm Adv}(m; f, S, \mathcal{D})\leq D_{\calQ}(S, \calD)$.	
\end{proposition}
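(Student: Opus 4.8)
The plan is to prove this by simply unwinding the definitions: the statement follows from chaining together a rewriting of the advantage in terms of $Q_m$, the elementary inequality $a\le|a|$, and the definition of the supremum. There is essentially no analytic content, so I would keep the argument to a few lines and make explicit the one modeling assumption it relies on.

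Concretely, I would proceed in three steps. \emph{Step 1.} Start from the definition of advantage in Equation~\eqref{eq:adv} and establish the identity already displayed in the text, namely
$$
{\rm Adv}(m; f, S, \mathcal{D}) = \bbP_{(x, y)\sim S}((f(x), y)\in Q_m) - \bbP_{(x, y)\sim \calD}((f(x), y)\in Q_m).
$$
This reduces to checking that, for points in $\mathrm{supp}(\calD)$ (which contains $S$ almost surely, since $S$ is drawn from $\calD$), the events $\{m((x,y),f)=1\}$ and $\{(f(x),y)\in Q_m\}$ coincide; I discuss this below. \emph{Step 2.} Since any real number is at most its absolute value,
$$
{\rm Adv}(m; f, S, \mathcal{D}) \le \left|\bbP_{(x, y)\sim S}((f(x), y)\in Q_m) - \bbP_{(x, y)\sim \calD}((f(x), y)\in Q_m)\right| = D(S,\calD\mid Q_m).
$$
\emph{Step 3.} By hypothesis $Q_m\in\calQ$, so $D(S,\calD\mid Q_m)\le \sup_{Q\in\calQ} D(S,\calD\mid Q) = D_{\calQ}(S,\calD)$. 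Concatenating the three steps yields the claim.

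The only point requiring care — and the reason the hypothesis ``$m$ is a post-hoc function of $(f(x),y)$'' is needed — is the reverse inclusion in Step 1. The forward direction, $m((x,y),f)=1 \Rightarrow (f(x),y)\in Q_m$, is immediate from the definition of $Q_m$. For the reverse, if $(f(x),y)\in Q_m$ then there is some $(x',y')\in\mathrm{supp}(\calD)$ with $(f(x'),y')=(f(x),y)$ and $m((x',y'),f)=1$; writing $m((x,y),f)=g(f(x),y)$ for the underlying post-hoc function $g$, this forces $g(f(x),y)=g(f(x'),y')=1$, i.e.\ $m((x,y),f)=1$. Without this assumption, two different inputs mapping to the same pair $(f(x),y)$ could receive different MIA verdicts, making $\{m=1\}$ strictly larger than $\{(f(x),y)\in Q_m\}$ and breaking the identity. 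Everything else — the triangle-inequality step and taking the supremum — is routine, so I do not anticipate any genuine obstacle.
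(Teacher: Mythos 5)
Your proposal is correct and follows essentially the same route the paper takes implicitly (the paper gives no separate proof of Proposition~\ref{prop:discrepancy}, treating it as immediate from the rewriting of the advantage via $Q_m$ and the definition of the supremum). Your extra care in Step~1 --- noting that the identity between $\{m=1\}$ and $\{(f(x),y)\in Q_m\}$ relies on $m$ being a post-hoc function of $(f(x),y)$ --- is a valid and welcome clarification of the paper's standing assumption, not a deviation in approach.
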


\paragraph{Choosing a discriminative set family $\calQ$.} 


What are some of the popular scoring functions for MIA?
Two of the most popular ones are MSP and ENT, both of which are based on scores that are convex or concave functions of $(f(x), y)$ and hence have convex discriminative sets. This suggests convex discriminative sets $\calQ_{cvx}=\{Q|Q\subseteq \bbR^{2C}, Q\text{ is a convex set}\}$ as a potential candidate. Are there any more such scoring functions? It turns out that two other popular ones -- CE and ME -- while not based on convex functions -- can be shown to have advantages that are equal to that of a convex or concave scoring function; hence, the discrepancy over convex discriminative sets is an upper bound on their advantage as well. This is encapsulated by the following theorem.


\begin{theorem}
\label{thm:cvx_dis_up}
	For an arbitrary threshold $\tau\in\bbR$ and any of $m\in\{m_{msp, \tau}, m_{ent, \tau}, m_{ce, \tau}, m_{me, \tau}\}$, 
	${\rm Adv}(m; f, S, \mathcal{D})\leq D_{\calQ_{cvx}}(S, \calD).$
\end{theorem}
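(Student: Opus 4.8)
The plan is to reduce the theorem, via the one-line argument behind Proposition~\ref{prop:discrepancy} together with the observation that the per-set discrepancy is complement-invariant, $D(S,\calD|Q)=D(S,\calD|Q^c)$, to the following: for each of the four attacks, exhibit a \emph{convex} set $\tilde Q\subseteq\bbR^{2C}$ (equivalently, a set whose complement is convex) whose indicator agrees with the attack on the image of $\mathrm{supp}(\calD)$ under $(x,y)\mapsto(f(x),y)$. Indeed, if such a $\tilde Q$ exists, then since $S$ and $\calD$ are supported on that image (and labels there are one-hot), $\bbP_{z\sim S}((f(x),y)\in\tilde Q)=\bbP_{z\sim S}(m(z,f)=1)$ and likewise for $\calD$, so ${\rm Adv}(m;f,S,\calD)\le D(S,\calD|\tilde Q)\le D_{\calQ_{cvx}}(S,\calD)$.

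For MSP and ENT this is immediate. Both $h_{msp}(z,f)=-\max_{c}f(x)_c$ and $h_{ent}(z,f)=-\sum_c f(x)_c\log f(x)_c$ depend only on $f(x)$, and $p\mapsto\max_c p_c$ is convex while $p\mapsto-\sum_c p_c\log p_c$ is concave; hence $\{p:h_{msp}<\tau\}$ and $\{p:h_{ent}<\tau\}$ are, respectively, the complement of an intersection of halfspaces and the complement of a superlevel set of a concave function, and taking the product with $\bbR^{C}$ in the label coordinates gives the required (co-convex) $\tilde Q$.

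The CE and ME cases are the crux, because there the global discriminative set is genuinely non-convex, so I would argue label by label. Fix a one-hot label $e_j$. For CE the attack fires iff $-\log f(x)_j<\tau$, i.e.\ $f(x)_j>e^{-\tau}$, so the slice $A_j:=\{p:p_j>e^{-\tau}\}$ is a halfspace. For ME the attack fires iff $\phi_j(f(x))<\tau$ with $\phi_j(p)=-(1-p_j)\log p_j-\sum_{c\ne j}p_c\log(1-p_c)$; I would check by a one-variable second-derivative computation that $t\mapsto-(1-t)\log t$ and $s\mapsto-s\log(1-s)$ are convex on $(0,1)$, so $\phi_j$ is separably convex and $A_j:=\{p:\phi_j(p)<\tau\}$ is convex (boundary points $p_c\in\{0,1\}$ handled by continuous extension, or ignored as measure zero). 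The remaining step is a gadget: given convex $A_1,\dots,A_C$, the set $\tilde Q:=\mathrm{conv}\bigl(\bigcup_{j=1}^{C}A_j\times\{e_j\}\bigr)$ is convex and satisfies $\tilde Q\cap(\bbR^{C}\times\{e_j\})=A_j\times\{e_j\}$ for every $j$, because $e_1,\dots,e_C$ are the extreme points of the simplex: in any representation $e_j=\sum_k\lambda_k e_{i_k}$ with $\lambda_k\ge0,\ \sum_k\lambda_k=1$, all weight sits on indices $i_k=j$, so a point of $\tilde Q$ in the slice $y=e_j$ is a convex combination of points of $A_j\times\{e_j\}$ only. Applying this to the $A_j$ above yields a convex $\tilde Q$ agreeing with $m_{ce,\tau}$ (resp.\ $m_{me,\tau}$) on the support, which finishes the proof.

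The main obstacle I anticipate is exactly this CE/ME step: one must notice that the natural ambient set $\{(p,y):\langle p,y\rangle>e^{-\tau}\}$ is bilinear, hence not convex, and instead pass to the per-label decomposition plus the convex-hull gadget; the ME convexity check is the only nontrivial calculation, and some care is needed at the boundary of the probability simplex where the logarithms blow up.
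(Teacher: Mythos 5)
Your proof is correct, and for the CE/ME cases it takes a genuinely different route from the paper. The paper handles CE and ME by constructing a surrogate score $g(f,z)$ that agrees with the attack score at one-hot labels and is jointly convex in $(f(x),y)$ over the relaxed domain $(0,1]^C\times[0,1]^C$ (adding label-entropy correction terms and verifying coordinate-wise $2\times 2$ Hessians, which for ME is a fairly delicate computation), so that the discriminative set of $g$ is convex. You instead never relax the label: you fix each one-hot slice, observe the slice of the discriminative set is convex (a halfspace $\{p_j>e^{-\tau}\}$ for CE; a sublevel set of the separably convex $\phi_j$ for ME, via two one-variable second-derivative checks), and then glue the slices with the hull gadget $\tilde Q=\mathrm{conv}\bigl(\bigcup_j A_j\times\{e_j\}\bigr)$, using that the $e_j$ are extreme points so the hull cannot mix slices --- that extreme-point argument is correct, as is your use of complement-invariance of $D(S,\calD|\cdot)$ for MSP/ENT (the paper is slightly more casual there, speaking of ``concave discriminative sets''). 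The trade-off: your slice-and-glue argument is more elementary and more general (it only needs per-label convexity of the decision region, not the existence of a convex extension of the score to fractional labels), and it sidesteps the paper's messiest Hessian calculation and its boundary issue at $f(x)_c=1$; the paper's construction, in exchange, yields an explicit convex scoring function, which aligns with its narrative that the bound is attained by a score-based MIA. Your boundary caveat is in fact moot since softmax outputs lie strictly in $(0,1)$.
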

\begin{proof}[Proof sketch of Theorem~\ref{thm:cvx_dis_up}]
	The proof for $m_{msp, \tau}$ and $m_{ent, \tau}$ is straightforward because the MSP score and the ENT score are convex/concave in $(f(x), y)$, hence thresholding the convex/concave function gives a convex/concave discriminative set.
	The proof for $m_{ce, \tau}$ and $m_{me, \tau}$ takes more effort because the cross-entropy score and modified-entropy score are indeed not convex/concave in $(f(x), y)$.
	For each of $m_{ce, \tau}$ and $m_{me, \tau}$, we can construct another function $g(f, z)$ such that 1. it agrees with the score (CE or ME) for any $(f, z)$ and 2. it is convex or concave in $(f(x), y)$, $\forall$ $f(x), y\in (0, 1]^C\times [0, 1]^C$. 
    Hence, the discriminative set of $g(f,z)$ is the same as the discriminative set of the score (CE or ME) in domain $ (0,1]^C \times \{0,1 \}^C$ and is convex in domain $(0,1]^C \times [0,1]^C$.
	This is possible because by definition the label $y$ of $z$ should be a one-hot vector and the convexity is discussed for a larger domain.
	With this construction, ${\rm Adv}(m_{\mathrm{score}, \tau}; f, S, \mathcal{D}) = {\rm Adv}(m_{g, \tau}; f, S, \mathcal{D}) = \mathbb{P}_{z\sim S}((f(x),y) \in Q_{m_{g, \tau}}) - \mathbb{P}_{z\sim \mathcal{D}}((f(x),y) \in Q_{m_{g, \tau}}) \leq D_{\calQ_{cvx}}(S, \calD)$ for $\mathrm{score}\in\{ce, me\}$.
\end{proof}
It is worthwhile to highlight from the theorem that ${\rm Adv}(\calM_{cvx}; f, S, \mathcal{D})$ can be proved to be an upper bound for the advantage of an MIA $m$ even when $Q_m$ is neither convex nor concave --- $Q_{m_{CE}}$ and $Q_{m_{ME}}$ are the examples as proved in Theorem~\ref{thm:cvx_dis_up}.

We propose $D_{\calQ_{cvx}}(S, \calD)$ as a new privacy metric and now we turn to the three criteria discussed at the beginning of Section~\ref{sec:discrepancy}.
\textbf{First}, from Proposition~\ref{prop:discrepancy}, $D_{\calQ_{cvx}}(S, \calD)$ is an upper bound for all MIA, whose discriminative sets are convex or concave; Theorem~\ref{thm:cvx_dis_up} further shows MIAs with four popular existing scores, where two of them are even not convex or concave functions, are upper bounded by $D_{\calQ_{cvx}}(S, \calD)$.
\textbf{Second}, observe that as 
$\calQ_{cvx}$ is not arbitrarily expressive -- $Q\in\calQ_{cvx}$ is constrained to be a convex set in $\bbR^{2C}$, $D_{\calQ_{cvx}}(S, \calD)$ should not be loose enough to be a trivial upper bound. In particular, our numerical experiments will illustrate 
that this upper bound $D_{\calQ_{cvx}}(S, \calD)$ is non-vacuous in many common cases and is capable of distinguishing between different models and different datasets.
\textbf{Third}, we are going to show that an approximation of $D_{\calQ_{cvx}}(S, \calD)$ can be computed efficiently; this is the topic of the next subsection.

\paragraph{Comparison with the existing MIAs.} We now compare our new metric to two categories of the existing MIAs introduced in Section~\ref{sec:preliminary}. As proved in Theorem~\ref{thm:cvx_dis_up}, the new metric is stronger than the popular score-based MIAs. On the other hand, because its computation doesn't involve training multiple ``shadow models", it is more computational-feasible than the MIAs leveraging ``shadow models" and can be applied to large models.

\subsection{Approximation of the discrepancy} 
\label{sec:approximation of the discrepancy}

In general, it is challenging to represent arbitrary convex sets, and hence a natural strategy is to approximate the set $\calQ_{cvx}$ by a set $\calQ_{cvx, K}$ of all convex polytopes with $K$ facets for large $K$. It turns out that there exists a $K$ such that this approximation is exact when considering closed sets, suggesting this is a viable solution strategy. 

\begin{proposition}
\label{thm:cvx_up2}
Suppose $\calQ_{cvx}'$ and $\calQ_{cvx, K}'$ are the sets of all closed convex sets and closed convex polytopes respectively. We have $D_{\calQ_{cvx}'}(S, \calD) = D_{\calQ_{cvx, K}'}(S, \calD)$ for $K=\binom{|S|}{2C}$.
\end{proposition}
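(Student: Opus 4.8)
\mypara{Proof proposal.}
The plan is to exploit that $S$ is \emph{finite}. Let $T\subseteq\bbR^{2C}$ be the image of $S$ under $(x,y)\mapsto(f(x),y)$, a set of at most $|S|$ points, and abbreviate $\mu_S(Q):=\bbP_{(x,y)\sim S}((f(x),y)\in Q)$ and $\mu_\calD(Q):=\bbP_{(x,y)\sim\calD}((f(x),y)\in Q)$, so $D(S,\calD\mid Q)=|\mu_S(Q)-\mu_\calD(Q)|$ and $\mu_S(Q)$ depends on $Q$ only through $T\cap Q$. Since $\calQ_{cvx, K}'\subseteq\calQ_{cvx}'$ we get $D_{\calQ_{cvx, K}'}(S,\calD)\le D_{\calQ_{cvx}'}(S,\calD)$ for free, so all the work is in the reverse inequality. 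For that it suffices to prove: for every closed convex $Q$ there is a closed convex polytope $P$ with at most $K=\binom{|S|}{2C}$ facets and $D(S,\calD\mid P)\ge D(S,\calD\mid Q)$; taking the supremum over $Q$ then finishes.

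Fix a closed convex $Q$ and split $T=A\sqcup B$ with $A:=T\cap Q$ and $B:=T\setminus Q$. I would case on the sign of $\mu_S(Q)-\mu_\calD(Q)$. If it is $\ge 0$, take $P:=\mathrm{conv}(A)$; then $A\subseteq P\subseteq Q$ since $Q$ is convex, hence $P\cap T=A$ so $\mu_S(P)=\mu_S(Q)$, while $\mu_\calD(P)\le\mu_\calD(Q)$ by monotonicity, giving $D(S,\calD\mid P)\ge\mu_S(P)-\mu_\calD(P)\ge\mu_S(Q)-\mu_\calD(Q)=D(S,\calD\mid Q)$. If it is $<0$, then for each $b\in B$ the separating-hyperplane theorem applied to the closed convex set $Q$ and the external point $b$ yields a closed halfspace $H_b\supseteq Q$ with $b\notin H_b$; set $P:=\bigcap_{b\in B}H_b$. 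Then $P\supseteq Q$ and $P\cap B=\emptyset$, so $P\cap T=A$, $\mu_S(P)=\mu_S(Q)$, $\mu_\calD(P)\ge\mu_\calD(Q)$, and $D(S,\calD\mid P)\ge\mu_\calD(P)-\mu_S(P)\ge D(S,\calD\mid Q)$.

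It remains to bound the facet count of $P$. In the first case $P$ is the convex hull of $|A|\le|S|$ points of $\bbR^{2C}$: the key combinatorial fact is that each facet, being $(2C{-}1)$-dimensional, lies on a supporting hyperplane affinely spanned by $2C$ affinely independent vertices, and distinct facets of a polytope lie on distinct hyperplanes, which gives an injection from facets into $2C$-element subsets of $A$, hence at most $\binom{|A|}{2C}\le\binom{|S|}{2C}=K$ facets. In the second case $P$ is an intersection of $|B|\le|S|$ closed halfspaces and so has at most $|S|\le\binom{|S|}{2C}$ facets whenever $|S|\ge 2C+1$, which is the only regime of interest. I expect the genuine work to be two bookkeeping points rather than anything deep: (i) making the facet-to-$2C$-subset injection airtight — one must check a facet really does contain $2C$ affinely independent vertices and that a polytope's facets determine distinct supporting hyperplanes; and (ii) $\bigcap_{b\in B}H_b$ is a priori only a polyhedron, so if ``polytope'' is meant in the bounded sense one should first replace $Q$ by $Q\cap B_0$ and $P$ by $P\cap B_0$ for a fixed box $B_0\supseteq\Delta_{C-1}\times[0,1]^C$ (the range of $(f(x),y)$), which contains all of $T$ and all $\calD$-mass, changes none of the quantities above, and adds only $4C$ facets. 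The degenerate cases $Q=\emptyset$, $T\subseteq Q$, and $|S|<2C$ are either forced into the first case or vacuous, and can be dispatched by the trivial bound $D\ge 0$.
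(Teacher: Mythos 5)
Your proof is correct and follows essentially the same route as the paper's: the inner approximation is the convex hull of the training points captured by $Q$ and the outer approximation is an intersection of supporting halfspaces excluding the training points outside $Q$, with the same $\binom{|S|}{2C}$ facet count (you merely case on the sign of $\mu_S(Q)-\mu_\calD(Q)$ where the paper constructs both $Q_1,Q_2$ and takes a max). Your explicit facet-to-$2C$-subset injection and the bounding-box fix for possible unboundedness of the halfspace intersection are minor tightenings of details the paper delegates to the Upper Bound Theorem or leaves implicit.
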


Unfortunately, exactly calculating the discrepancy distance $D_{\calQ_{cvx}}(S, \calD)$ over even polytopes with $2$ facets is NP-hard~\citep{gottlieb2018learning}! 
To overcome this hardness, 
we first observe that each convex polytope with $K$ facets can be parameterized as $Q_{w_i, b_i, i\in[K]}:=\{a|a\in\bbR^{2C}, w_i^\top a+b_i\leq 0, \text{ where } w_i\in\bbR^{2c}, b_i\in\bbR, i\in[K]\}$.
After this parameterization, we can use a standard technique in machine learning -- instead of optimizing over the $0/1$ loss in discrepancy, we instead use a smoother surrogate loss and optimize over it. 
Specifically, we choose logistic regression $\ell_{lg}$.
By notating $a_{z, f}=(f(x), y)$, the objective function can be written as
\begin{equation}
\label{eq:cpm}
\max_{w_i\in\bbR^{2C}, b_i\in\bbR, i\in[K], s=\pm1}
\frac{1}{|S|}\sum_{z\sim S}\ell_{lg}\left(\max_{i\in[K]}w_i\top a_{z, f}+b_i, s\right) + \bbE_{z\sim \calD}\left[\ell_{lg}\left(\max_{i\in[K]}w_i\top a_{z, f}+b_i, -s\right)\right].
\end{equation}
Now the objective function is both parametric and continuous.
Although it is still non-convex, we can use gradient descent to find the approximate solution.

To distinguish the exact solution $(w_i^*, b_i^*)$ representing the best polytope and the approximate solution $(\hat{w}_i, \hat{b}_i)$ solved by Equation~\ref{eq:cpm}, we name the optimal value $D(S, \calD|Q_{w_i^*, b_i^*, i\in[K]})$ as the CPB (Convex Polytope Bound) and by following \citet{kantchelian2014large} name $D(S, \calD|Q_{\hat{w}_i, \hat{b}_i, i\in[K]})$ as the CPM (Convex Polytope Machine). Moreover, we notice that CPM is achievable by some score-based MIA: $D(S, \calD|Q_{\hat{w}_i, \hat{b}_i, i\in[K]})$ is equivalent to the advantage of a score-based MIA with scoring function $\max_{i\in[K]}\hat{w_i}\top a_{z, f}+\hat{b_i}$ or $-\max_{i\in[K]}\hat{w_i}\top a_{z, f}+\hat{b_i}$.

\section{Experiment}\label{Sec:Experiment}
In this section, we investigate the empirical performance of CPM on both various models and several datasets, where we follow the setups in the MIA literature~\citep{chen2022relaxloss}, and pre-trained ImageNet classification models in the wild. In particular, we are interested in the following questions:
\begin{enumerate}[leftmargin=*,nosep]
    \item How does CPM perform compared to other scoring functions on models trained with different learning algorithms?
    \item How good is the approximation quality of CPM with $K$ facets as a function of $K$? 
    \item How does CPM perform on models in the wild?
\end{enumerate}

\subsection{Experimental Setup}

\begin{figure}[t!]
    \centering
    \subfigure[CIFAR-10]{\includegraphics[width=0.45\textwidth]{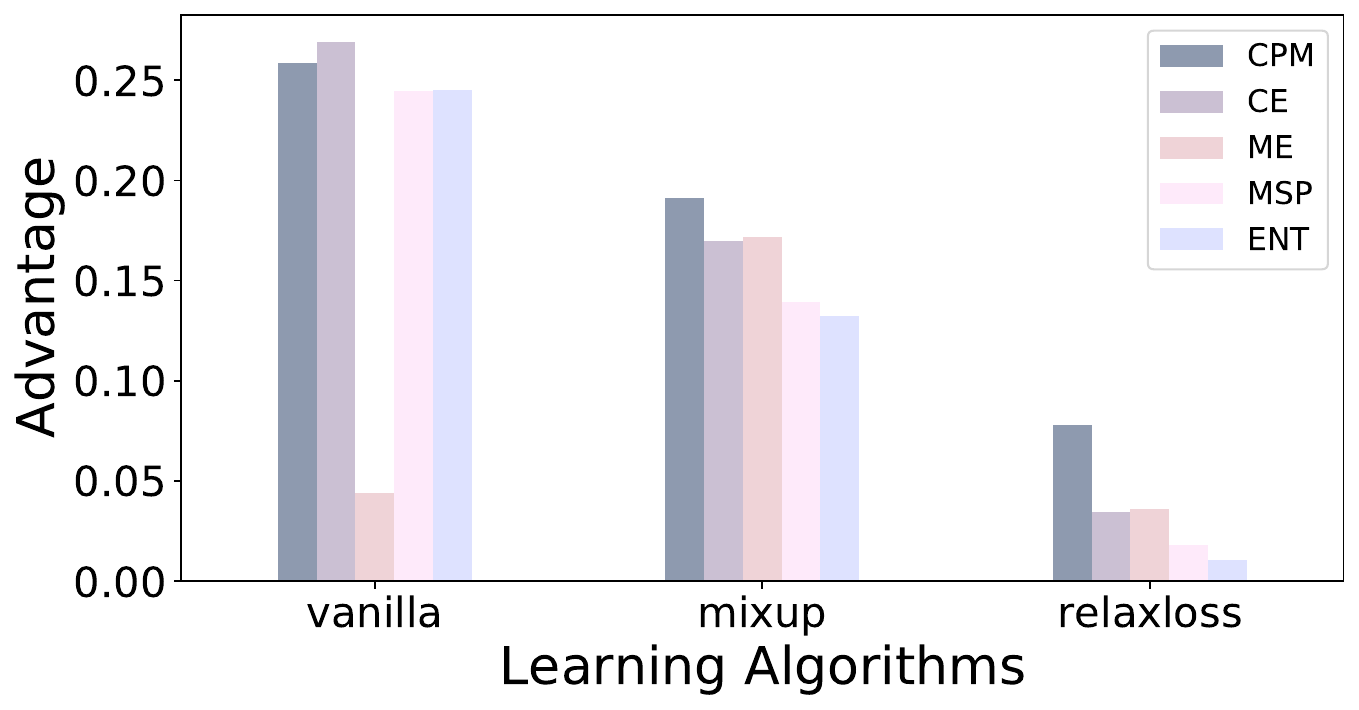}
    \label{fig:CIFAR_10}}
    \hfill
    \subfigure[CIFAR-100]{\includegraphics[width=0.45\textwidth]{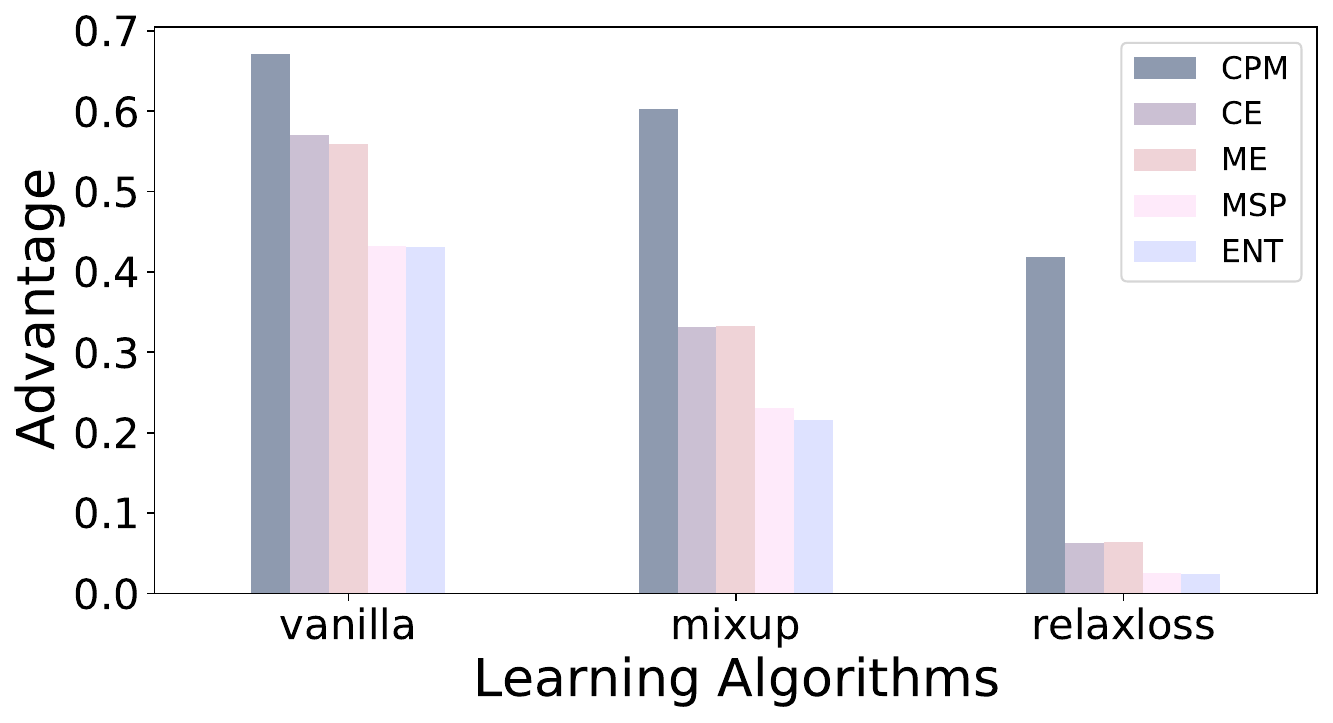}
    \label{fig:CIFAR_100}}\\
    \vspace{-2ex}
    \subfigure[Texas]{\includegraphics[width=0.45\textwidth]{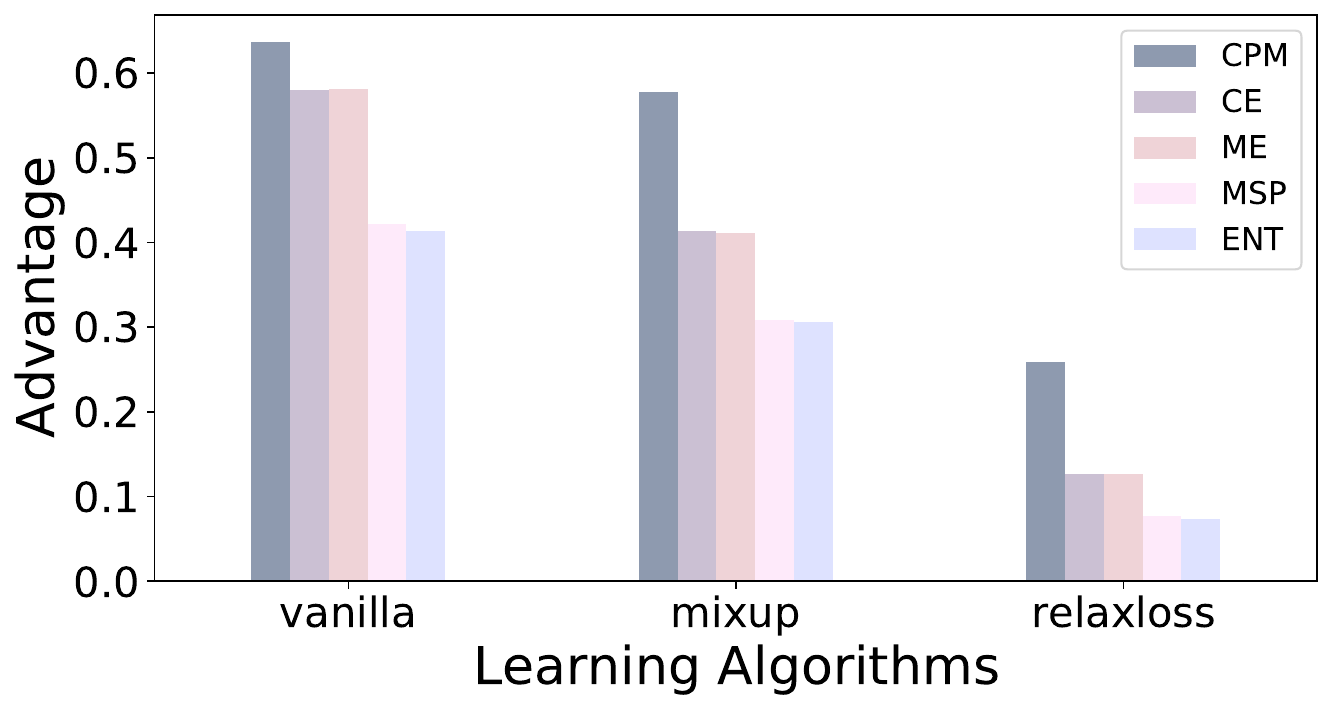}
    \label{fig:Texas}}
    \hfill
    \subfigure[Purchase]{\includegraphics[width=0.45\textwidth]{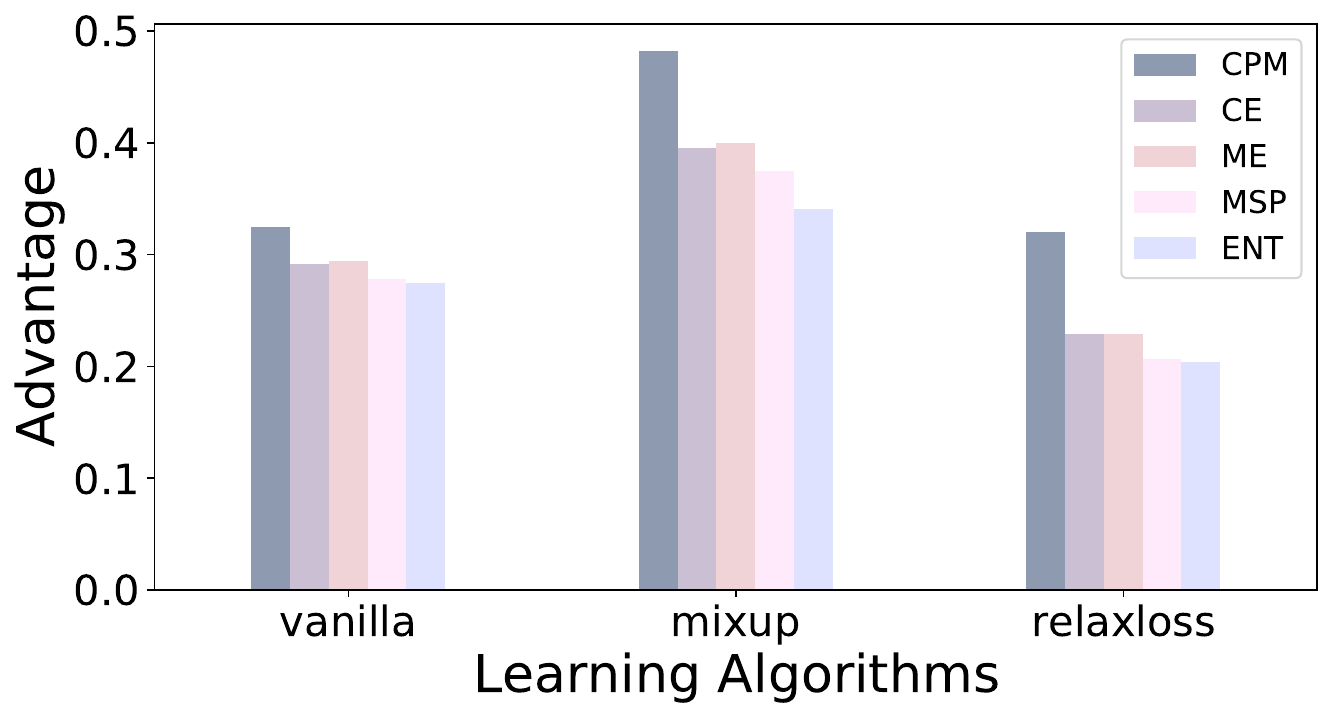}
    \label{fig:Purchase}}
    \caption{CPM and the advantage of baselines on models trained on CIFAR-10 \ref{fig:CIFAR_10}, CIFAR-100 \ref{fig:CIFAR_100}, Texas \ref{fig:Texas} and Purchase \ref{fig:Purchase}. As shown in the figures, CPM is an upper bound to the advantage of the baseline scores for most models.}
    \vspace{-3ex}
    \label{fig:full}
\end{figure}

\textbf{Datasets and Models.} 
We first consider two image classification datasets: CIFAR-10~\citep{cifar}, CIFAR-100~\citep{cifar} and two tabular datasets~\citep{shokri2017membership}: Texas100, Purchase100. The CIFAR-10 has 10 classes and the other three datasets have 100 classes.
The following settings for these four datasets and the to-be-test models mainly follow the MIA literature~\cite{carlini2022membership, chen2022relaxloss} and we call these models by \textit{models trained from scratch}.
We use ResNet-20~\citep{resnet} as the model architecture for CIFAR-10 and CIFAR-100, and MLP ~\citep{haykin1994neural} as the model architecture for Texas100 and Purchase100. Models are trained with 10000 balanced training samples. 
We use three different training methods. The first, \emph{vanilla}, is trained by minimizing the standard cross-entropy loss. \emph{Mixup}~\citep{zhang2018mixup} is trained by minimizing the cross-entropy loss after linear interpolation of the training data and labels -- a method that is known to promote generalization. Finally, our third method \emph{RelaxLoss}~\citep{chen2022relaxloss} trains models by minimizing the the cross-entropy loss in a dynamic way, which is a state-of-the-art empirical defense against MIAs. These three methods lead to models with accuracy varying between $78-84\%$ for CIFAR-10 and $39-52\%$ for CIFAR-100, $52-58\%$ for Texas-100 and $78-89\%$ for Purchase-100.

We further test our methods on ImageNet~\citep{deng2009imagenet} dataset and pre-trained Imagenet models downloaded from the publicly available Pytorch Torchvision library ~\citep{paszke2017PyTorch}; surprisingly, there are no previous published MIA results on these models.
We pick ResNet-50, ResNet-101 and ResNet-152 models from Pytorch~\cite{paszke2019pytorch} pre-trained on ImageNet with both versions 1 and version 2\footnote{The models are from \url{https://pytorch.org/vision/stable/models.html}.}.The version 1 models are trained by minimizing the usual cross-entropy loss, while the version 2 models, which are more accurate, use an advanced training recipe that includes many generalization-promoting techniques such as Label Smoothing~\citep{muller2019does}, Mixup, Cutmix~\citep{yun2019cutmix}, Random Erasing~\citep{zhong2020random} and so on. 

\textbf{CPM setup.}  we get our new metric CPM by optimizing the objective in Equation~\ref{eq:cpm} with $K = 1000$. All optimizations are conducted with GPU on NVIDIA GeForce RTX 3080, and the longest time for one optimization is 20 minutes. See optimization details in Appendix~\ref{sec:app_exp}.


\textbf{Baselines.} 
We compare CPM with four popular baseline scores used in the literature: maximum-softmax-probability (MSP), entropy (ENT), cross-entropy loss (CE) and modified entropy (ME); see Table~\ref{tab:mia_score} for their definitions.

\textbf{Evaluation Method.} To evaluate the advantage (from equation \ref{eq:adv}) of MIAs between training samples and testing distribution, for each target model, the CPM is computed by optimizing Equation~\ref{eq:cpm} on the entire training set and half of the test set. 
We then report the actual advantage of the CPM calculated over the training set and the rest of the testing samples.
Similarly, for the other scores, we choose the optimal threshold based on the training set and half of the testing samples, and evaluate the actual advantage similarly.

\subsection{Observations}

\begin{figure}[t!]
    \centering
    \subfigure[Vanilla]{\includegraphics[width=0.3\textwidth]{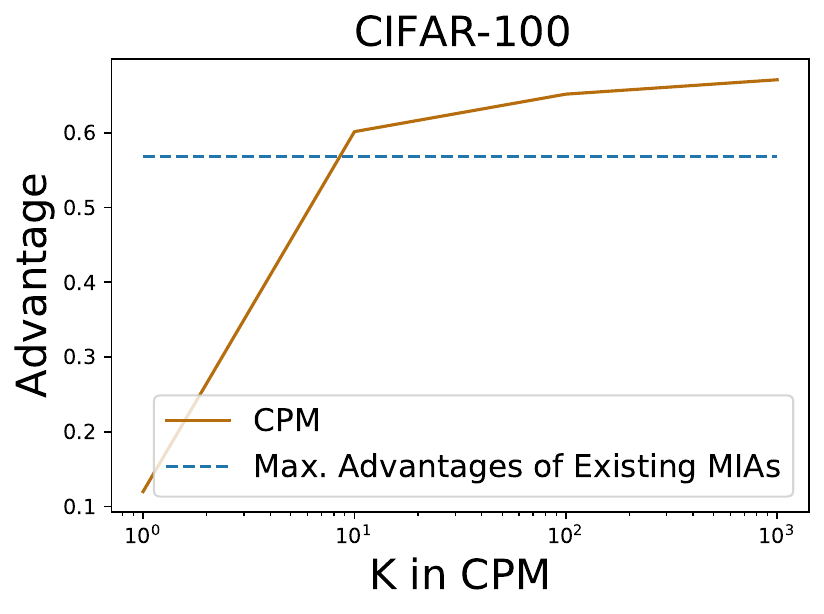}\label{fig:vanilla_abla_cifar100}}
    \hfill
    \subfigure[MixUp]{\includegraphics[width=0.3\textwidth]{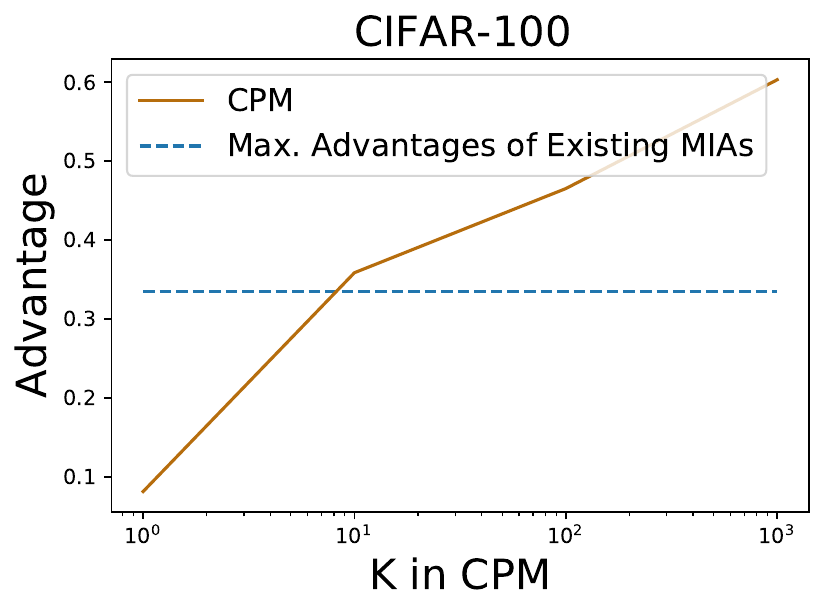}\label{fig:mixup_abla_cifar100}}
    \hfill
    \subfigure[Relaxloss]{\includegraphics[width=0.3\textwidth]{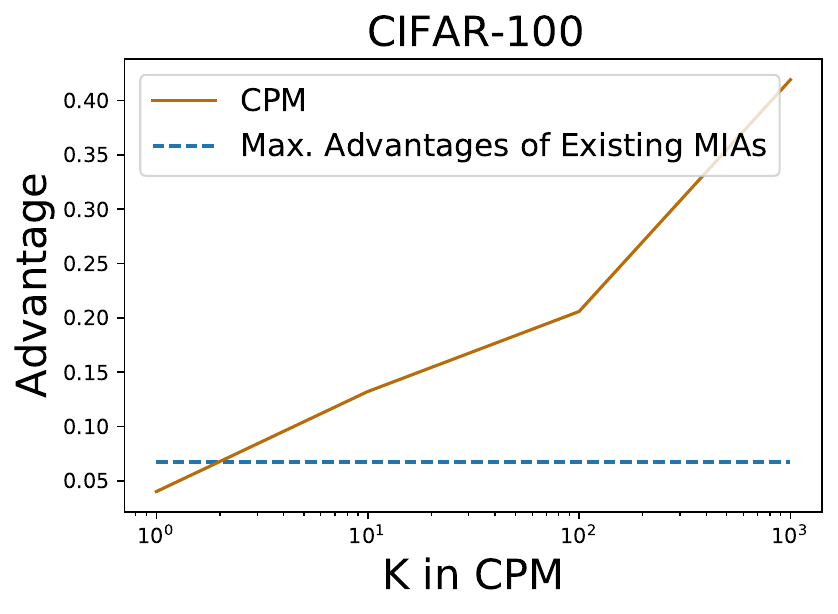}\label{fig:relaxloss_abla_cifar100}}
    \hfill
    \captionof{figure}{CPM with different numbers of facets $K$. The figures show CPM achieves a higher advantage than the existing MIAs as an uppper bound with a moderate value of $K$. }
    \vspace{-2ex}
   \label{fig:ablation}
\end{figure}

\begin{wrapfigure}{O}{0.45\textwidth}
\centering
\vspace{-4ex}
\includegraphics[width=0.45\textwidth]{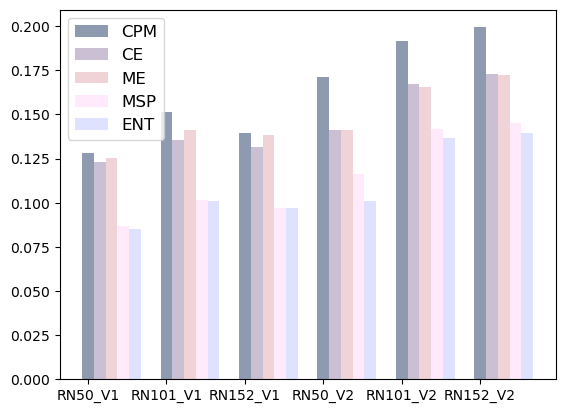}
\caption{\label{fig:imagenet}CPM and the advantage of baseline scores on PyTorch models. It shows that CPM is very close to the advantage of the baseline scores for the V1 models, but the gap is significantly larger for the V2 models.}
\vspace{-2ex}
\end{wrapfigure}

\textbf{Models trained from scratch.} The results for CIFAR-10, CIFAR-100, Texas-100 and Purchase-100 are presented in Figures~\ref{fig:CIFAR_10}, ~\ref{fig:CIFAR_100}, ~\ref{fig:Texas} and ~\ref{fig:Purchase} respectively. We see that for most models, CPM is an upper bound on the advantage of the four baseline scores. This corroborates Theorem \ref{thm:cvx_dis_up}, and shows that the approximate solution CPM for the upper bound serves as a stronger privacy metric. Note that only for the vanilla model in CIFAR-10, Cross-Entropy loss achieves slightly higher advantage than CPM. This might be because CPM {\em{approximates}} the theoretical upper bound CPB, which suggests that the difference might be due to approximation error.

More interestingly, we observe that CPM is very close to the advantage of the Cross-Entropy loss in vanilla models, while there is a sizeable gap between the advantages for Mixup and Relaxloss models.
Recall that CPM is achievable by some score-based MIA as discussed in Section~\ref{sec:approximation of the discrepancy}.
This suggests the design of scores such as Cross-Entropy and MSP, which are used because of their high empirical performance on cross-entropy trained models, might be overfitting to these kinds of models. Different scores may be needed for more effective membership inference in more sophisticated models. 


\textbf{Effect of $K$ for approximation quality of CPM.} Recall that the parameter $K$ in the CPM computation measures approximation quality. We plot CPM on CIFAR-100 versus the number of facets $K$ in Figure~\ref{fig:ablation}. We see that this is a monotone increasing function where larger $K$ has higher advantage, suggesting that CPM is approaching the true upper bound CPB well when $K$ is getting larger.  Furthermore, we see that the CPM outperforms the advantage of other scores even at $K = 10$, which suggests that in these cases, even a reasonably large value of $K$ can achieve larger advantage. We make the similar evaluation for $K$ on other three datasets in Appendix~\ref{sec:app_exp}.

\textbf{Models in the Wild.} The results for PyTorch models are shown in Figure \ref{fig:imagenet}. We observe that CPM is very close to the advantage of the baseline scores for the Version 1 models which are trained by minimizing the cross-entropy loss, whereas the gap is significantly larger for the version 2 models, which are trained with a more advanced recipe.
This shows that our observation from Figure \ref{fig:full} that the existing baseline scores may not be very effective for membership inference in more sophisticated models also holds for pre-trained models in the wild.

\section{Improving MI Attacks for More Sophisticated Models}

Our results from Section \ref{Sec:Experiment} show that while the current loss-based membership inference attacks are effective for cross-entropy trained models, they are considerably less so for the more sophisticated models of today. Thus, a natural question to ask is whether there are other better score-based membership inference attacks for the models trained through different procedures? 

This suggests us to design training-procedure aware scores. Although this requires the additional knowledge for the training procedure, this assumption can be realistic -- the public foundation models, for example Pytorch pre-trained ImageNet classification models and public large language models (GPT~\citep{radford2019language}, Llama~\citep{touvron2023llama}), are always accompanied with the certain level of training details -- and this assumption is much easier than other common assumptions in literatures that require either the full model weights or the additional auxiliary dataset.

Also, notice that the training-procedure aware score is different from the loss aware score -- for example in the previous experiments, vanilla, MixUp and RelaxLoss models are all trained based on cross-entropy loss, but their training-procedures are different. 
We are going to design training-procedure aware scores next for the MixUp and RelaxLoss models.

For Mixup models, the training loss is the cross-entropy loss of a ``Mixup'' -- or, linear combination -- of two training points. This suggests the following Mixup score. 
Suppose we have a small auxiliary subset $S_{\rm aux}$ of the training data; \footnote{In our evaluation $|S_{\rm aux}|$ is smaller than $200$.} then we define the Mixup score $ m_{\rm MixUp}(z, f)$, as the CE loss between a linear combination of $z$ and $z_{\rm aux} \in S_{\rm aux}$:
$$
\frac{1}{R|S_{\rm aux}|}\sum_{r\in[R], z_{\rm aux}\in S_{\rm aux}}\ell_{\rm ce}(f(x_{\rm mix}), y_{\rm mix}),
$$
where $x_{\rm mix}=\lambda_{r}\cdot x + (1-\lambda_r)\cdot x_{\rm aux}$, $y_{\rm mix}=\lambda_{r}\cdot y + (1-\lambda_r)\cdot y_{\rm aux}$, and $\lambda_r$ ($r\in[R]$) are i.i.d. sampled from a uniform distribution $\mathcal{U}_{[0.5, 1]}$.

For RelaxLoss models, there is no static loss function cross optimization iterations and the training procedure is more complicated. If the loss on an example is more than $\alpha$, then~\citet{chen2022relaxloss} does standard SGD. Otherwise, if the classifier predicts correctly on the example, then we do gradient ascent; if not, then they do gradient descent on a modified loss function that incorporates a smoothed label. We mimic this dynamic training procedure into the following RelaxLoss score $m_{\rm Relaxloss}(z, f)$: 
\begin{align}
\label{eq:relaxloss}
	|\ell_{\rm ce}(f(x), y)-\alpha| + \left(1.5 - \ell_{0/1}(f, z)\right) \cdot \ell_{\rm ce}(f(x), y) + \left(0.5 + \ell_{0/1}(f, z)\right) \cdot \ell_{\rm ce-s}(f(x), y),
\end{align}
where $\alpha$ is a hyperparameter, $ \ell_{0/1}$ is the classification error, and $\ell_{\rm ce-s}(f(x), y):=\sum_{c=1}^C \log(f(x)_c) y^{\rm soft}_c)$ is the cross-entropy loss after label smoothing as defined in \citet{chen2022relaxloss}. 
The soft label $y^{\rm soft}$ is defined as $y^{\rm soft}_{c^*}=\min\{f(x)_{c^*}, \mu\}$ where $y_{c^*}=1$ and $y^{\rm soft}_c=\frac{1-\min\{f(x)_{c^*}, \mu\}}{C-1}$ for any $c\neq c^*$ and $\mu$ is another hyperparameter to soften the original one-hot label.

\subsection{Experiments with Mixup and RelaxLoss Scores}
In this section, we implement the Mixup score and RelaxLoss score attack on pre-trained models from the previous section. We are interested in the following question: Do the Mixup score and RelaxLoss score outperform existing MIA scores for Mixup and RelaxLoss models respectively?

\textbf{Experimental Setups.} For the Mixup experiments, we implement a number of values of $|S_{aux}|$ and $R$. For two image datasets, we randomly sample our auxiliary dataset $S_{aux}$ from the test set with $|S_{aux}|$ equal to 30 and $R = 10$. For tabular datasets, we randomly sample $S_{aux}$ from the train set with $|S_{aux}| = 100, R = 100$ for Purchase-100, and $|S_{aux}| = 150, R = 150$ for Texas-100. For the RelaxLoss experiments, we set the hyperparameters $(\alpha, \mu)$ in Equation~\ref{eq:relaxloss} as what they were used in the RelaxLoss training procedure~\citep{chen2022relaxloss}. The two values of $(\alpha, \mu)$ are $(1, 1), (3, 1), (2.5, 0.1), (0.8, 0.3)$ for CIFAR-10, CIFAR-100, Texas-100 and Purchase-100 respectively.

\textbf{Observations.} The results for CIFAR-10, CIFAR-100, Texas-100 and Purchase-100 are presented in Table \ref{tab:CIFAR-10}, \ref{tab:CIFAR-100}, \ref{tab:Texas}, \ref{tab:Purchase-100} respectively. We see that as we expected, the Mixup score and the RelaxLoss score have the highest advantage for the Mixup and the RelaxLoss model respectively. In other words, the advantage is the highest when the training procedure and the MIA scores are aligned. This suggests that for the modern models, that are trained with more sophisticated procedures, we may be able to design better MIA scores that are training-procedure aware.

\begin{table}[t!]
    \centering
    \small
    \subtable[CIFAR-10]{
        \begin{tabular}{|c|c|c|c|} \hline
        Model $\backslash$ Score  & CE & Mixup & RelaxLoss   \\ \hline
        Vanilla  & \textbf{28.45} & 15.11 & 12.53 \\
        Mixup &18.50 & \textbf{18.92} & 13.80\\
        RelaxLoss & 4.9 & 4.75 & \textbf{5.24}
        \\ \hline
    \end{tabular}
    \label{tab:CIFAR-10}
    }
    \quad
    \subtable[CIFAR-100]{
        \begin{tabular}{|c|c|c|c|} \hline
        Model $\backslash$ Score  & CE & Mixup & RelaxLoss   \\ \hline
        Vanilla  & \textbf{56.80} & 50.60 & 48.65 \\
        Mixup &33.36 & \textbf{33.95} & 31.89\\
        RelaxLoss & 6.62 & 6.73 & \textbf{8.64} \\\hline
    \end{tabular}
    \label{tab:CIFAR-100}
    }
    \quad
    \subtable[Texas]{
        \begin{tabular}{|c|c|c|c|} \hline
        Model $\backslash$ Score  & CE & Mixup & RelaxLoss   \\ \hline
        Vanilla  & \textbf{58.53} & 58.52 & 46.91\\
        Mixup & 41.16 & \textbf{41.25} & 35.34\\
        RelaxLoss & 13.10 & 13.09 & \textbf{13.45}
        \\ \hline
    \end{tabular}
    \label{tab:Texas}
    }
\quad
\subtable[Purchase]{
\begin{tabular}{|c|c|c|c|} \hline
        Model $\backslash$ Score  & CE & Mixup & RelaxLoss   \\ \hline
        Vanilla  & \textbf{29.17} & 22.77 & 11.07 \\
        Mixup &39.11 & \textbf{39.12} & 32.77\\
        RelaxLoss & 22.82 & 22.77 & \textbf{22.86}
        \\ \hline
    \end{tabular}
    \label{tab:Purchase-100}
}
    \caption{The advantage of MIA with cross-entropy score, Mixup score, and RelaxLoss score (columns) for models (rows) trained on CIFAR-10 \ref{tab:CIFAR-10}, CIFAR-100 \ref{tab:CIFAR-100}, Texas \ref{tab:Texas} and Purchase \ref{tab:Purchase-100}. The tables show that the advantage is the highest when the training loss and the MIA scores are aligned. }
    \vspace{-4ex}
    \label{tab:fulltable}
\end{table}
\section{Related Work}
\textbf{Membership inference attack in large-scale foundation models.} 
While this paper mainly discusses the membership inference attack for classification models, many literature study the membership inference attack for more recent large-scale foundation models.
Several work study the MIA for the unsupervised models such as GAN~\citep{chen2020gan}, contrastive learning~\citep{liu2021encodermi, he2021quantifying} and diffusion models~\citep{matsumoto2023membership, duan2023diffusion}.
MIA has been explored for multi-modal models such as CLIP~\citep{ko2023practical, hintersdorf2022does}, text-to-image generation models~\citep{wu2022membership}, and image captioning models \citep{hu2022m}.
\citet{mireshghallah2022quantifying, mattern2023membership, fu2023practical} explore MIA for the latest large language models.

\textbf{Attribute inference attack.}
Another popular empirical privacy metric is attribute inference, where the input is the partial knowledge of a training data point and the model, and the output is an estimation of the unknown sensitive attributes.
Attribute inference has been studied in various data domains.
\citet{fredrikson2014privacy, yeom2018privacy, mehnaz2022your} study the attribute inference on tabular dataset,
\citet{zhang2020secret, aivodji2019gamin, meehan2024ssl} focus on the unsupervised image models, and \citet{jia2017attriinfer, gong2018attribute} explore the attribute inference in social networks.

\textbf{Differential privacy.} 
Differential privacy~\citep{dwork2006calibrating,dwork2014algorithmic} is a theoretical privacy definition for any learning algorithm.
Different from empirical privacy metrics which are measured by any designed attacks, the DP parameters for the learning algorithm, which indicates the privacy leakage, need to be proved.
\citet{yeom2018privacy, humphries2023investigating, wu2023does} have discussed the relationship between differential privacy and empirical privacy metrics such as membership inference attack or attribute inference attack.
On the other hand, its good theoretical property motivates the design of privacy-preserving algorithms and it has been widely deployed in many tasks~\citep{abadi2016deep, iyengar2019towards, de2022unlocking, yu2021differentially}.

\section{Conclusion}\label{sec:Conclusion}
In this work, we proposed a new empirical privacy metric based on discrepancy distance, i.e. the discrepancy distance between the training and test data with respect to a class of sets. We show that this is a stronger privacy metric than four current score-based MIAs and gives an informative upper bound. 
In addition, we introduce an approximation CPM to the new metric, that is computationally efficient. 
In our experiment section, we observe that existing MIAs are upper-bounded by the CPM.
More interestingly, we find that the design of existing score-based MIA may overfit standard models and is suboptimal for the model trained by sophisticated recipes. 
Motivated by this observation, we propose two new score-based MIAs for MixUp and RelaxLoss, which achieve higher advantages. 

\textbf{Limitation and future work.} First, this paper considers the discrepancy distance with respect to the convex family of sets in the probability space. One extension can be either considering a more general family than the convex family or exploring the discriminative set in the logit space or the feature space.
Another future direction motivated by this work is to design better score-based MIA for models trained by more sophisticated learning techniques in various data domains. 

\paragraph{Acknowledgement}
This work is supported by grants from National Science Foundation NSF (CIF-2402817, CNS-1804829), SaTC-2241100, 
CCF-2217058, ARO-MURI (W911NF2110317), and ONR under N00014-24-1-2304.  


\newpage
\bibliographystyle{abbrvnat}
\bibliography{main}


%
\newpage
\appendix
\section{Proofs of Theorems in Section~\ref{sec:discrepancy}} \label{sec:appendix}
\subsection{Proof of Theorem~\ref{thm:cvx_dis_up}}
\begin{proof}[Proof of Theorem~\ref{thm:cvx_dis_up}.]
We start from some notations. Given a discriminative set $Q\subseteq \bbR^{2C}$, we define an MIA $m_Q(f, z):=\mathds{1}[(f(x), y)\in Q]$.

The proof for $m_{msp, \tau}$ and $m_{ent, \tau}$ is straightforward because the MSP score and the ENT score are convex/concave in $(f(x), y)$, hence thresholding the convex/concave function gives a convex/concave discriminative set.

For $m_{\rm CE}$, it takes more effort, because $\sum_{c\in [C]}-y_c\log(f(x)_c)$ is no longer convex in $(f(x), y)$. 
We will construct a $\tilde{m}(f, z)=\mathds{1}\left[g(f, z)< \tau\right]$ for $(f(x), y)\in (0, 1]^C \times [0, 1]^C$, which has the following property:
\begin{enumerate}
\item $g(f, z)= \sum_{c\in [C]}-y_c\log(f(x)_c)$, for $(f(x), y)$ in $(0,1]^C \times {\color{red}\{}0,1{\color{red}\}}^C$.
\item $g(f, z)$ is a convex function of $(f(x), y)$ on $(0, 1]^C \times [0, 1]^C$.
\end{enumerate}
If such $\tilde{m}$ exists, because of the second property, by following the same argument for $m_{\rm MSP}$, we can prove there exists a convex set $Q\subseteq \bbR^{2C}$ s.t. $m_Q(f, z)=\tilde{m}(f, z)$. 
Moreover, property 1 implies that $m(f, z)=\tilde{m}(f, z)$ for all $(f(x), y)\in\{(f(x), y)| (x, y) \in S\cup D_{\rm test}\}$, which completes the proof.

Now we are going to show the construction of $\tilde{m}(f, z)=\mathds{1}\left[g(f, z)< \tau\right]$. 
We define
$$
g(f, z):=\sum_{c\in [C]}-y_c\log(f(x)_c) + \sum_{c: y_c\neq 0} y_c\log(y_c), \forall(f(x), y)\in (0, 1]^C \times [0, 1]^C.
$$
By definition, property 1 naturally holds. We are going to verify the property 2.
Firstly, $g(f, z)$ is a continuous function on $(0, 1]^C \times [0, 1]^C$ by the fact that $\lim_{a\to 0}a\log(a)=0$.
Secondly, we can prove $\nabla^2_{f(x), y}g(f, z)\succeq \mathbf{0}\ \forall(f(x), y)\in (0, 1]^C \times {\color{red}(}0, 1]^C$.
It is sufficient to prove $\nabla^2_{f(x)_c, y_c}g(f, z)\succeq \mathbf{0}$ because $\nabla^2_{f(x)_c, y_{c'}}g(f, z)= \mathbf{0}$ when $c\neq c'$.
To see $\nabla^2_{f(x)_c, y_c}g(f, z)\succeq \mathbf{0}$, $\forall \ba\in \bbR^2$,
$$
\ba\nabla^2_{f(x)_c, y_c}g(f, z)\ba^{\top} = \frac{y_c}{f(x)_c^2}\ba_1^2 - \frac{2}{f(x)_c}\ba_1\ba_2 + \frac{1}{y_c}\ba_2^2=\frac{1}{y_c} \left(\frac{y_c}{f(x)_c}\ba_1-\ba_2\right)^2\geq 0.
$$
Therefore $g(f, z)$ is a convex function of $(f(x), y)$ on $(0, 1]^C \times [0, 1]^C$.

Similarly, for $m_{\rm ME}$, we are going to construct an $F$ such that 
\begin{enumerate}
\item $g(f, z)=-\sum_{c\in [C]}\left((1-f(x)_c)\log(f(x)_c)y_c + f(x)_{c}\log(1-f(x)_{c})(1-y_c)\right)$ for $(f(x), y)$ in $(0,1]^C \times \{0,1\}^C$.
\item $g(f, z)$ is a convex function of $(f(x), y)$ on $(0, 1]^C \times [0, 1]^C$.
\end{enumerate}
We define 
$$g(f, z):=-\sum_{c\in [C]}\left((1-f(x)_c)\log(f(x)_c)y_c + f(x)_{c}\log(1-f(x)_{c})(1-y_c)\right)
$$
$$
+5\sum_{c: y_c\neq 0} y_c\log(y_c) + 5\sum_{c: y_c\neq 1} (1-y_c)\log(1-y_c).
$$

By definition, property 1 naturally holds. We are going to verify the property 2.
Firstly, $g(f, z)$ is a continuous function on $(0, 1]^C \times [0, 1]^C$ by the fact that $\lim_{a\to 0}a\log(a)=0$.
Secondly, we can prove $\nabla^2_{f(x), y}g(f, z)\succeq \mathbf{0}\ \forall(f(x), y)\in (0, 1]^C \times {\color{red}(}0, 1]^C$.
It is sufficient to prove $\nabla^2_{f(x)_c, y_c}g(f, z)\succeq \mathbf{0}$ because $\nabla^2_{f(x)_c, y_{c'}}g(f, z)= \mathbf{0}$ when $c\neq c'$.
To see $\nabla^2_{f(x)_c, y_c}g(f, z)\succeq \mathbf{0}$, $\forall \ba\in \bbR^2$,

\begin{align*}
&\ba\nabla^2_{f(x)_c, y_c}g(f, z)\ba^{\top} \\
&= \left(y_c\cdot \left(\frac{1}{f(x)_c^2} + \frac{1}{f(x)_c}\right) + (1-y_c)\cdot \left(\frac{1}{(1-f(x)_c)^2} + \frac{1}{1-f(x)_c}\right) \right)\ba_1^2\\
&+ 2 \left(\frac{1}{1-f(x)_c}-\frac{1}{f(x)_c}+\log(f(x)_c)-\log(1-f(x)_c)\right)\ba_1\ba_2+ \left(\frac{3}{y_c} + \frac{3}{1-y_c}\right)\ba_2^2\\
&= \frac{1}{y_c}\left(\frac{y_c}{f(x)_c}\ba_1 - \ba_2\right)^2 
+ \frac{1}{1-y_c}\left(\frac{1-y_c}{1-f(x)_c}\ba_1 - \ba_2\right)^2\\
&+\left(\frac{y_c}{f(x)_c}\ba_1^2 + 2\log(f(x)_c)\ba_1\ba_2 + \frac{4}{y_c}\ba_2^2\right)+\left(\frac{1-y_c}{1-f(x)_c}\ba_1^2 + 2\log(1-f(x)_c)\ba_1\ba_2 + \frac{4}{1-y_c}\ba_2^2\right)\\
&\geq \left(\frac{y_c}{f(x)_c}\ba_1^2 + 2\log(f(x)_c)\ba_1\ba_2 + \frac{4}{y_c}\ba_2^2\right)+\left(\frac{1-y_c}{1-f(x)_c}\ba_1^2 + 2\log(1-f(x)_c)\ba_1\ba_2 + \frac{4}{1-y_c}\ba_2^2\right)
\end{align*}
We are going to first prove $\frac{y_c}{f(x)_c}\ba_1^2 + 2\log(f(x)_c)\ba_1\ba_2 + \frac{4}{y_c}\ba_2^2\geq 0 $. Because $\frac{y_c}{f(x)_c}>0$, it is sufficient to prove
$(2\log(f(x)_c))^2\leq 4\cdot \frac{y_c}{f(x)_c}\cdot \frac{4}{y_c}$, which is equivalent to $-\log(f(x)_c)\leq 2\sqrt{\frac{1}{f(x)_c}}$. Define $h(f(x)_c)=2\sqrt{\frac{1}{f(x)_c}} +\log(f(x)_c)$. $\forall f(x)_c\in(0, 1]$, $h'(f(x)_c)=-\frac{1}{\sqrt{f(x)_c}f(x)_c} +\frac{1}{f(x)_c}<0$. Thus, $h(f(x)_c)\geq h(1)=\sqrt{2}>0$ and $\frac{y_c}{f(x)_c}\ba_1^2 + 2\log(f(x)_c)\ba_1\ba_2 + \frac{4}{y_c}\ba_2^2\geq 0 $ has been proved. 

Similarly, we can prove $\left(\frac{1-y_c}{1-f(x)_c}\ba_1^2 + 2\log(1-f(x)_c)\ba_1\ba_2 + \frac{4}{1-y_c}\ba_2^2\right)\geq 0$ and we have completed the proof for $\ba\nabla^2_{f(x)_c, y_c}g(f, z)\ba^{\top}\geq 0$. We now have the convexity of $g(f, z)$  on $(0, 1]^C \times [0, 1]^C.$
\end{proof}

\subsection{Proof of Proposition~\ref{thm:cvx_up2}}
\begin{proof}[Proof of Proposition~\ref{thm:cvx_up2}]
Our proof mostly follows the proof in \citet{niederreiter1972discrepancy}. Suppose $z=(x, y)$. $\forall Q\in \calQ_{cvx}'$, we are going to find $Q_1, Q_2\in \calQ_{cvx, k}'$ such that $\bbP_{z\sim \calD}((f(x), y)\in Q_1)\leq \bbP_{z\sim \calD}((f(x), y)\in Q)\leq \bbP_{z\sim \calD}((f(x), y)\in Q_2)$ and $\bbP_{z\sim S}((f(x), y)\in Q)=\bbP_{z\sim S}((f(x), y)\in Q_1)=\bbP_{z\sim S}((f(x), y)\in Q_2)$.
Within this $Q_1, Q_2$, 
$$D(S, \calD|Q)\leq \max\{D(S, \calD|Q_1), D(S, \calD|Q_2)\},$$
and therefore $D_{\calQ_{cvx}'}(S, \calD)\leq D_{\calQ_{cvx, k}'}(S, \calD)$.
It is obvious that $D_{\calQ_{cvx}'}(S, \calD)\geq D_{\calQ_{cvx, k}'}(S, \calD)$ because $\calQ_{cvx, k}'\subseteq \calQ_{cvx}'$.
Thus $D_{\calQ_{cvx}'}(S, \calD) = D_{\calQ_{cvx, k}'}(S, \calD)$.

To find $Q_1$, for any convex set $Q\in Q_{cvx}$, 
	if $Q\cap S=\emptyset$, we simply choose $Q_1=\emptyset$.
	If $Q\cap S\neq\emptyset$, we can consider a convex hull $Q_1$ for $Q\cap S$.
	From the definition of the convex hull, $Q_1\subseteq Q$.
	In both two cases above,	
	\begin{align*}
		\bbP_{z\sim S}((f(x), y)\in Q) - \bbP_{z\sim \calD}((f(x), y)\in Q) \leq \bbP_{z\sim S}((f(x), y)\in Q_1) - \bbP_{z\sim \calD}((f(x), y)\in Q_1)
	\end{align*}
	Because $Q\cap S$ is a discrete point set, the convex hull $Q_1$ would be a convex polytope whose vertices are a subset of $Q\cap S$.
	The Upper Bound Theorem~\citep{ziegler2012lectures} shows that the number of facets of a convex polytope with at most $|S|$ vertices can be bounded by $\binom{|S|}{2C}$, where $2C$ is the dimensionality of the space.
	
To find $Q_2$, we follow the proof in \citet{niederreiter1972discrepancy} to construct a $Q'$ first: 
Because $Q$ is a closed set by assumption, for each $z\in S\backslash (Q\cap S)$, we can find a supporting hyperplane of $Q$ such that $Q$ lies in the closed halfspace $H_z$ defined by this supporting hyperplane.
Then we can define $Q_2:=\cap_{z\in S\backslash (Q\cap S)}H_z$.
Obviously, $Q\subseteq Q_2$ by the definition of $H_z$.
Therefore, $\bbP_{z\sim \calD}((f(x), y)\in Q)\leq \bbP_{z\sim \calD}((f(x), y)\in Q_2)$.
Moreover, $\forall z\in S\backslash (Q\cap S)$, $z\notin Q_2$; $\forall z\in Q\cap S$, $z\in Q\subseteq Q_2$.
Thus, $\bbP_{z\sim S}((f(x), y)\in Q) = \bbP_{z\sim S}((f(x), y)\in Q_2)$.
Lastly, by the definition $Q_2$, it is a convex polytope with at most $|S|\leq \binom{|S|}{2C}$ facets.


\end{proof}

\section{Additional Experiment Results}
\label{sec:app_exp}
\paragraph{CPM setup.} For the models trained on CIFA-10, CIFAR-100, Purchase100 and Texas100, we get our new metric CPM by optimizing the objective in Equation~\ref{eq:cpm} and pick the number of facets $K = 1000$, lr $= 0.1, 0.01, 0.001$ and batch size 10000. For ImageNet pre-trained models, we get CPM with $K = 1000$, lr $= 0.001$ and batch size = 512. The optimizer is Adam. We select hyper-parameters that give the minimum optimization loss. All optimizations are conducted with GPU on NVIDIA GeForce RTX 3080, and the longest time for one optimization is 20 minutes. 

\paragraph{Effect of $K$ for approximation quality of CPM on additional datasets.} The parameter $K$ in the CPM computation measures approximation quality. We plot the advantage of CPM versus the number of facets $K$ for other three datasets CIFAR-10 (Figure~\ref{fig:ablation_cifar10}), Texas (Figure~\ref{fig:ablation_Texas}), Purchase (Figure~\ref{fig:ablation_Purchase}). We see the similar behavior to the figure of CIFAR-100 (Figure~\ref{fig:ablation}) in the main paper, that is, this is a monotone increasing function where larger $K$ has higher advantage, suggesting that CPM is approaching the true upper bound CPB well when $K$ is getting larger. Furthermore, for MixUp and RelaxLoss models, we see that the CPM outperforms the advantage of other scores even at $K = 100$, which suggests that in these cases, even a reasonably large value of $K$ can achieve larger advantage.

\label{sec:app_exp}
\begin{figure}[t!]
    \centering
    \subfigure[Vanilla]{\includegraphics[width=0.3\textwidth]{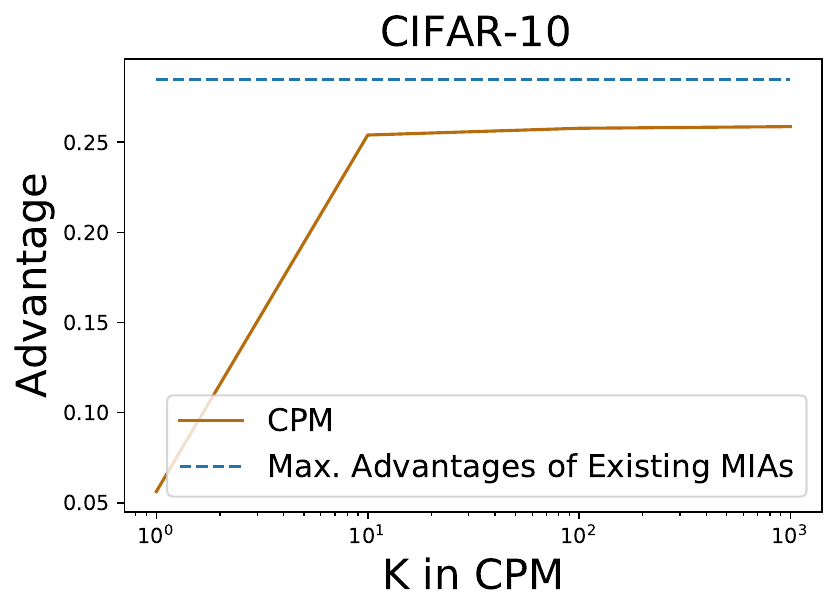}\label{fig:vanilla}}
    \hfill
    \subfigure[MixUp]{\includegraphics[width=0.3\textwidth]{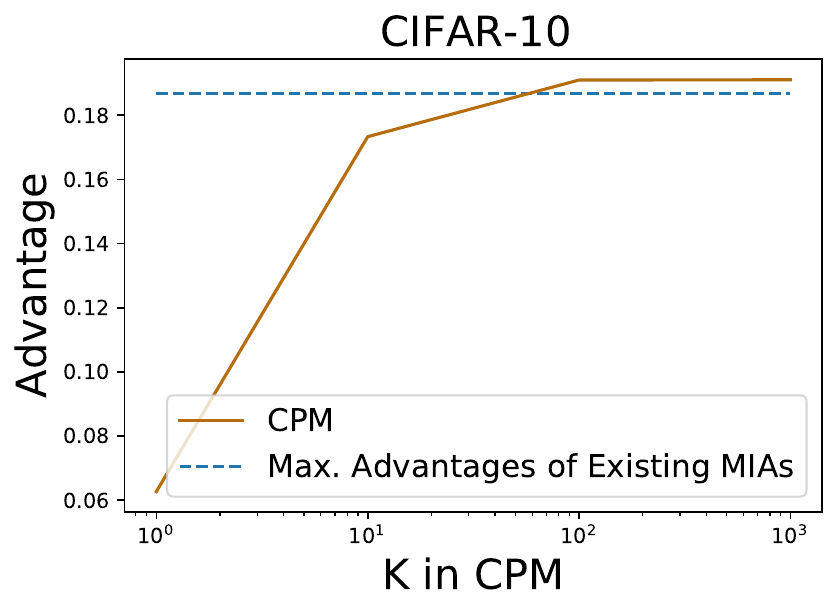}\label{fig:mixup}}
    \hfill
    \subfigure[Relaxloss]{\includegraphics[width=0.3\textwidth]{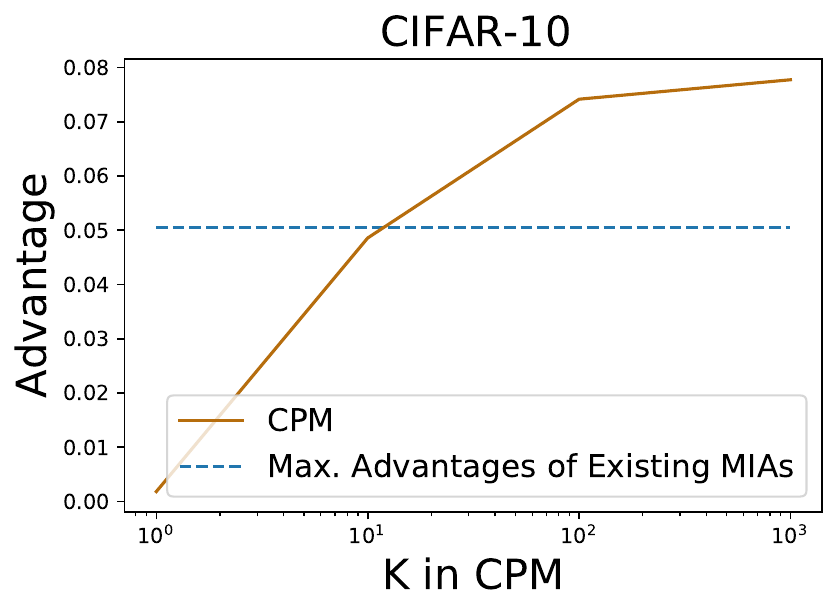}\label{fig:relaxloss}}
    \hfill
    \captionof{figure}{The advantage on CIFAR-10 when we find CPM with different numbers of facets $K$.}
   \label{fig:ablation_cifar10}
\end{figure}
\begin{figure}[t!]
    \centering
    \subfigure[Vanilla]{\includegraphics[width=0.3\textwidth]{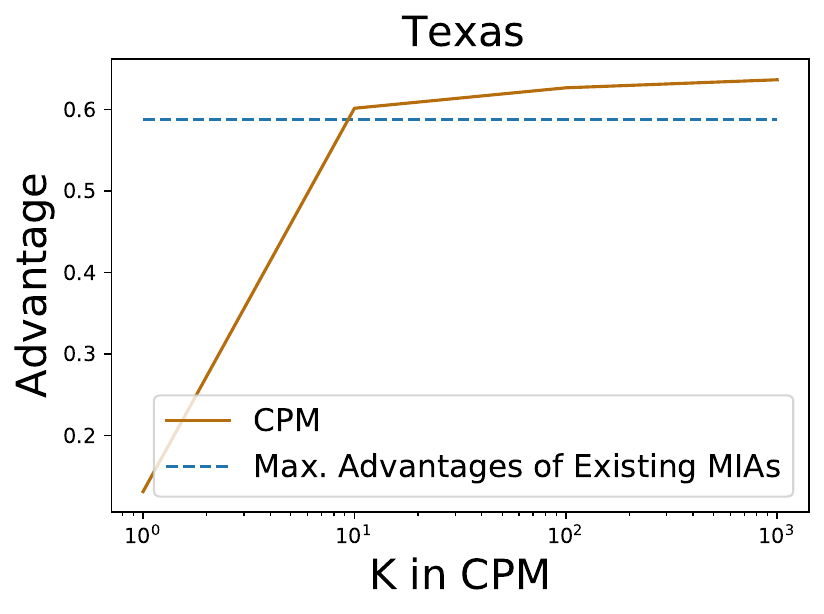}\label{fig:vanilla_abla_Texas}}
    \hfill
    \subfigure[MixUp]{\includegraphics[width=0.3\textwidth]{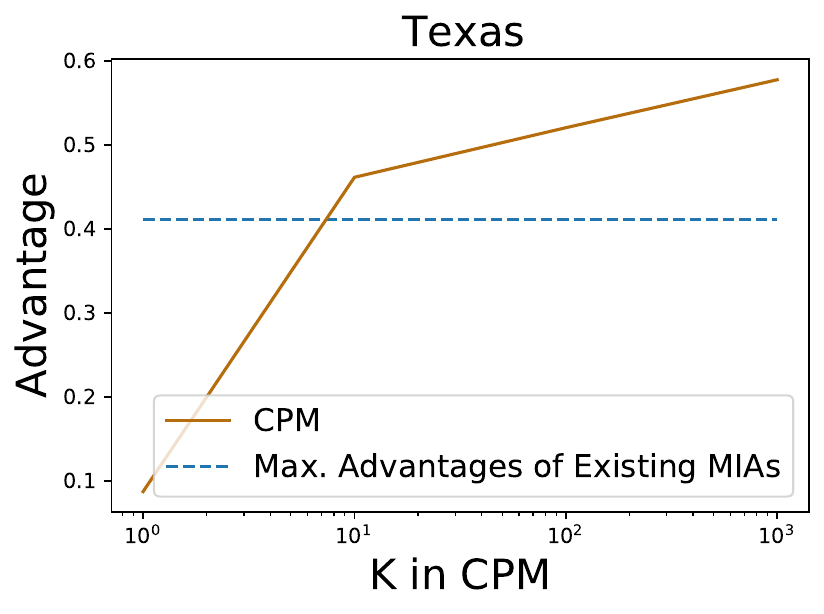}\label{fig:mixup_abla_Texas}}
    \hfill
    \subfigure[Relaxloss]{\includegraphics[width=0.3\textwidth]{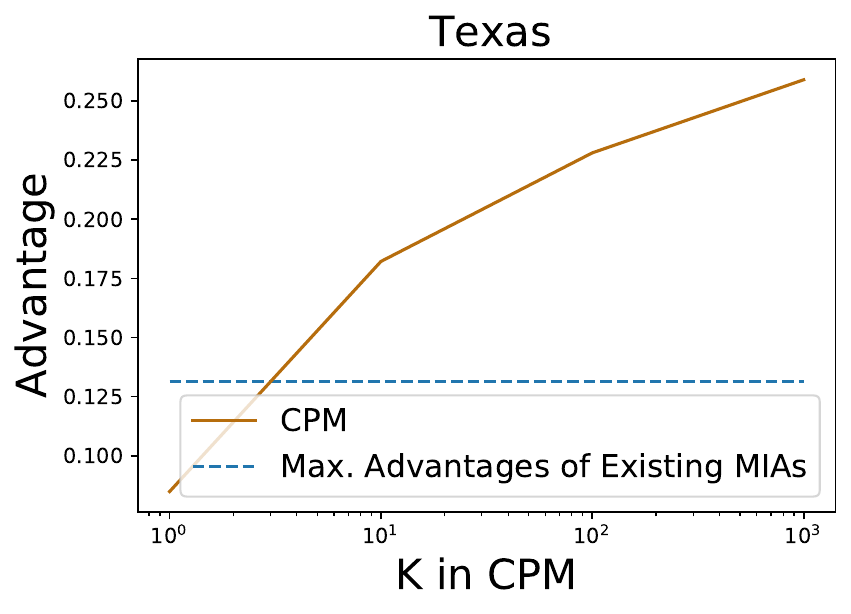}\label{fig:relaxloss_abla_Texas}}
    \hfill
   \captionof{figure}{The advantage on Texas when we find CPM with different numbers of facets $K$.}
   \label{fig:ablation_Texas}
\end{figure}
\begin{figure}[t!]
    \centering
    \subfigure[Vanilla]{\includegraphics[width=0.3\textwidth]{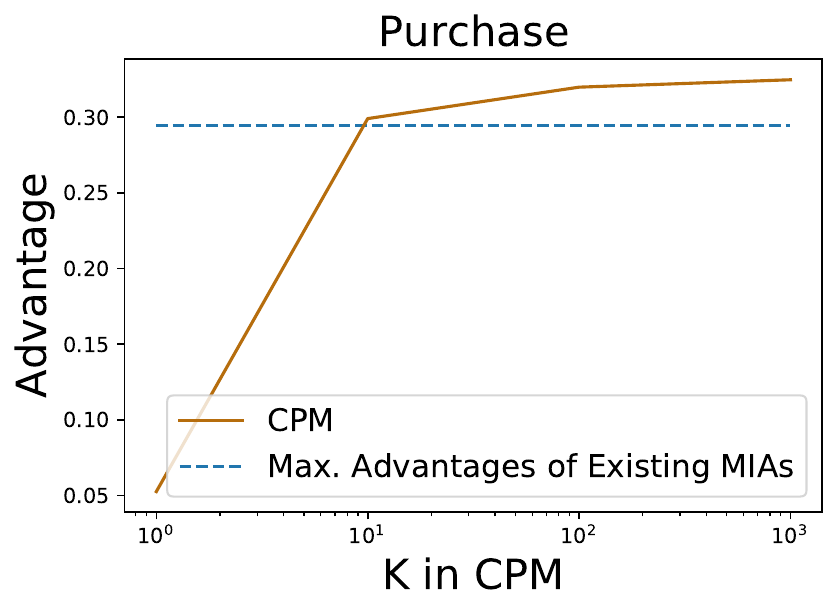}\label{fig:vanilla_abla_Purchase}}
    \hfill
    \subfigure[MixUp]{\includegraphics[width=0.3\textwidth]{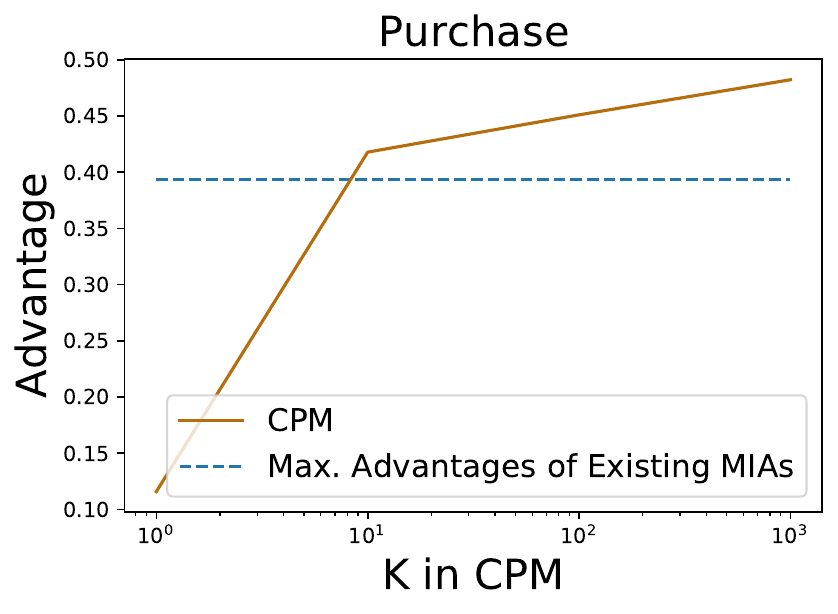}\label{fig:mixup_abla_Purchase}}
    \hfill
    \subfigure[Relaxloss]{\includegraphics[width=0.3\textwidth]{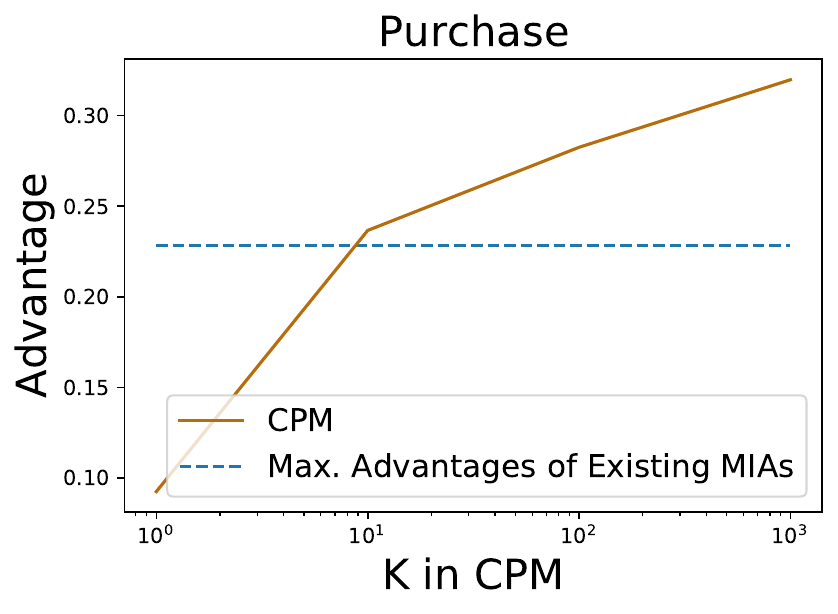}\label{fig:relaxloss_abla_Purchase}}
    \hfill
    \captionof{figure}{The advantage on Purchase when we find CPM with different numbers of facets $K$.}
   \label{fig:ablation_Purchase}
\end{figure}
\end{document}